\newtheorem{theorem}{\bf Theorem}
\newtheorem{remark}[theorem]{\bf Remark}
\newtheorem{lemma}[theorem]{\bf Lemma}
\newtheorem{problem}[theorem]{\bf Problem}
\newtheorem{corollary}[theorem]{\bf Corollary}
\newcommand{\bbD}{\mathbb{D}}
\newcommand{\bbN}{\mathbb{N}}
\newcommand{\bbR}{\mathbb{R}}
\newcommand{\bbS}{\mathbb{S}}
\newcommand{\calC}{\mathcal{C}}
\newcommand{\calI}{\mathcal{I}}
\newcommand{\calL}{\mathcal{L}}
\newcommand{\calP}{\mathcal{P}}
\DeclareMathOperator{\CNN}{CNN}
\DeclareMathOperator{\diag}{diag}
\DeclareMathOperator{\chol}{chol}
\DeclareMathOperator{\Cayley}{Cayley}
\newcommand\PP[1]{\textcolor{black}{#1}}
\tikzstyle{block} = [rectangle, draw, fill=blue!20, 
\tikzstyle{block2} = [rectangle, draw, fill=red!20, 
\tikzstyle{block3} = [rectangle, draw, fill=yellow!20, 
\tikzstyle{block4} = [text width=25em, text centered, minimum height=2em]
\tikzstyle{line} = [draw, -latex']
\title{\LARGE \bf
Lipschitz-bounded 1D convolutional neural networks\\
using the Cayley transform and the controllability Gramian
}
\author{Patricia Pauli$^1$, Ruigang Wang$^2$, Ian R. Manchester$^2$ and Frank Allg\"ower$^1$
\thanks{*This work was funded by Deutsche Forschungsgemeinschaft (DFG, German Research Foundation) under Germany's Excellence Strategy - EXC 2075 - 390740016 and under grant 468094890. The authors thank the International Max Planck Research School for Intelligent Systems (IMPRS-IS) for supporting Patricia Pauli.}
\thanks{$^{1}$Patricia Pauli and Frank Allgöwer are with the Institute for Systems Theory and Automatic Control, University of Stuttgart, 70569 Stuttgart, Germany
        {\tt\small patricia.pauli@ist.uni-stuttgart.de}}%
\thanks{$^{2}$ Ruigang Wang and Ian R. Manchester are with the Australian Centre for  Robotics and School of Aerospace, Mechanical and Mechatronic Engineering, The University of Sydney, Australia.}%
}
\begin{document}

\maketitle
\thispagestyle{empty}
\pagestyle{empty}

\begin{abstract}
    We establish a layer-wise parameterization for 1D convolutional neural networks (CNNs) with built-in end-to-end robustness guarantees. In doing so, we use the Lipschitz constant of the input-output mapping characterized by a CNN as a robustness measure. We base our parameterization on the Cayley transform that parameterizes orthogonal matrices and the controllability Gramian of the state space representation of the convolutional layers. The proposed parameterization by design fulfills linear matrix inequalities that are sufficient for Lipschitz continuity of the CNN, which further enables unconstrained training of Lipschitz-bounded 1D CNNs. Finally, we train Lipschitz-bounded 1D CNNs for the classification of heart arrythmia data and show their improved robustness.
\end{abstract}



\section{Introduction}
Robustness of neural networks (NNs) has lately been a topic of increasing importance, for which the Lipschitz constant of the NN's input-output mapping has become a common metric \cite{szegedy2013intriguing}. \PP{For a low  Lipschitz constant, a slightly perturbed input admits only small changes in the output which is desirable in robust NNs.} Finding an accurate upper bound on an NN's Lipschitz constant has been broadly tackled, e.g. using relaxations by quadratic constraints \cite{fazlyab2019efficient,pauli2023lipschitz}, average operators \cite{combettes2020deep} and polynomial optimization \cite{latorre2020lipschitz}. In addition, the training of provably Lipschitz-bounded NNs was proposed by including constraints \cite{pauli2021training,pauli2022neural} and regularization techniques \cite{gouk2021regularisation}. While effective, one drawback of these methods is the computational overhead coming from constraints and projections in the optimization problem \cite{pauli2022neural}.

To overcome this, \cite{revay2020lipschitz,revay2023recurrent,wang2023direct} suggest so-called direct parameterizations for equilibrium networks, recurrent equilibrium networks, and feedforward neural networks, respectively, with guaranteed Lipschitz bounds. From a set of unconstrained variables \cite{revay2020lipschitz,revay2023recurrent,wang2023direct} formulate the NNs in such a way that they by design satisfy linear matrix inequalities (LMIs). These LMIs in turn are sufficient conditions for Lipschitz continuity such that, this way, one can parameterize the class of Lipschitz-bounded NNs with a Lipschitz upper bound predefined by the user. The underlying training problem boils down to an unconstrained optimization problem that can be solved using gradient methods. In this work, we take the same approach as in \cite{revay2020lipschitz,revay2023recurrent,wang2023direct} to parameterize Lipschitz-bounded 1D convolutional neural networks (CNNs).

CNNs play an important role in deep learning and have been tremendously successful in image and audio processing tasks \cite{gu2018recent,kiranyaz20211d,oord2016wavenet}. We distinguish between 1D and 2D CNNs: By 1D CNNs, we mean CNNs with inputs with one propagation dimension such as time signals, whereas 2D CNNs take inputs with two propagation dimensions, e.g., pictures. This paper focuses on 1D CNNs, that  
typically consist of convolutions (= finite impulse response (FIR) filters), nonlinear activation functions, that are slope-restricted, pooling layers, and linear layers that are concatenated in a feedforward structure. 1D CNNs parameterize a rich class of nonlinear systems and are utilized as classifiers for time signals. An extension to 2D CNNs based on \cite{gramlich2022convolutional} is left for future work. 

While numerous methods exist for enforcing Lipschitz continuity and orthogonality in fully connected layers \cite{anil2019sorting}, the design of Lipschitz-bounded convolutional layers and CNNs is less studied and often restricted to special convolutions \cite{trockman2021orthogonalizing}. Recently, this has been approached via parameterization of convolutional layers in the Fourier domain \cite{wang2023direct,trockman2021orthogonalizing}, however, parameterizations in \cite{wang2023direct} are built for a particular input size, whereas our new parameterization in state space is independent of the input dimension, and can even be applied causally to a signal on $[0, \infty)$. 
Another feature of our approach is that we impose Lipschitz continuity directly onto the input-output mapping rather than on all individual layers, like it is done in many other works \cite{araujo2023a}, using that the product of the Lipschitz bounds of the layers yields a Lipschitz bound for the overall NN. In this way, our approach shows reduced conservatism in the compliance with the Lipschitz bound, i.e., our CNNs have higher expressivity for the same Lipschitz bound. In addition our approach accounts for standard pooling layers, 
which were not addressed in other recent Lipschitz-bounded parameterizations of CNNs \cite{wang2023direct}.

Our main contribution is a scalable and expressive layer-wise parameterization of Lipschitz-bounded 1D CNNs that makes use of the Cayley transform to parameterize orthogonal matrices. Beside the Cayley transform, a tool that was used for NN parameterization before, we newly propose to utilize the controllability Gramian in the context of parameterizing convolutional layers of Lipschitz-bounded CNNs. In particular, we reformulate parts of the underlying LMI, that enforces dissipativity onto convolutional layers, as a Lyapunov equation whose unique analytical solution is the controllability Gramian. Using our parameterization, we then train Lipschitz-bounded 1D CNNs solving an unconstrained optimization problem. 

The remainder of the paper is structured as follows: Section~\ref{sec:problem} first introduces 1D CNNs and formally states the training problem. In Section~\ref{sec:prelims}, we discuss preliminaries, including state space represenations for 1D convolutions and Lipschitz constant estimation for 1D CNNs. In Section~\ref{sec:parameterization}, we present our direct parameterization for Lipschitz-bounded 1D CNNs and in Section~\ref{sec:simulation}, we train Lipschitz-bounded 1D CNNs on the MIT-BIH arrhythmia database \cite{mitbihdatabase}, a well-known benchmark dataset for 1D CNNs. Finally, in Section~\ref{sec:conclusion}, we conclude the paper.

\textbf{Notation:} By $\bbD^n$ ($\bbD_+^n$) and $\bbS^n$ ($\bbS^n_+$), we denote the set of $n$-dimensional (positive definite) diagonal and symmetric matrices, respectively, and by $\bbN_+$ the natural numbers without zero. $\calI$ is a set of indices with elements $i\in\bbN_+$, and $\vert\calI\vert$ gives the number of elements in the index set $\calI$.




\section{Problem statement}\label{sec:problem}
We consider 1D CNNs that are a concatenation of convolutional layers $\calC_i: \bbR^{c_{i-1} \times N_{i-1}} \to \bbR^{c_i \times N_i}$ with indices $i\in\calI_C$, and fully connected layers $\calL_i: \bbR^{n_{i-1}} \to \bbR^{n_{i}}$ with indices $i\in\calI_F$
\begin{align}\label{eq:CNN}
    \mathrm{CNN}_\theta = \calL_{l} \circ \ldots \circ \calL_{p+1} \circ F \circ \calC_{p} \circ \ldots \circ \calC_1,
\end{align}
adding up to a total number of $l=\vert\calI_C\vert+\vert\calI_F\vert$ layers. Here, $N_i$ denotes the signal length, $c_i$ the channel size, and $n_i$ the layer dimension of the respective $i$-th layer. To transition from the fully convolutional part of the CNN to the fully connected part, we necessitate a flattening operation $F:\bbR^{c_p\times N_p}\to\bbR^{n_p}$ that operates on the output of the $p$-th (last) convolutional layer with $n_p=c_pN_p$.

A \emph{convolutional} layer consists of two to three stages, a convolution operation, a nonlinear activation, and possibly a pooling operation. The first two stages are
\begin{equation}\label{eq:CNN_layer}
    \widetilde{\calC}_i: w_k^{i} = \phi_i\left(b_i + \sum_{j=0}^{\ell_i-1} K_j^i w_{k-j}^{i-1}\right), \quad k = 0,\ldots,N_i-1~\forall i\in\calI_C,
\end{equation}
with convolution kernel $K^i_j\in\bbR^{c_{i} \times c_{i-1}}$, $j=0,\dots,\ell_i-1$, kernel size $\ell_i$, and bias $b_i\in\bbR^{c_{i}}$. First, a convolution on the signal $w^{i-1}\in\bbR^{c_{i-1}\times N_{i-1}}$ is applied and subsequently, the nonlinear activation function $\phi_i: \bbR^{c_i} \to \bbR^{c_i}$ is evaluated element-wise to obtain the output $w^{i}\in\bbR^{c_{i}\times N_{i-1}}$. Oftentimes, a convolutional layer additionally contains pooling layers $\calP_i: \bbR^{c_{i} \times N_{i-1}} \to \bbR^{c_i \times N_i}$ to downsample the signal $w^{i}$. We consider maximum pooling 
\begin{align*}
    \calP_i^\mathrm{max} : \tilde{w}_k^i =\max_{j = 1,\ldots, \ell_i} w^{i}_{\ell_i (k - 1) + j},~k = 0,\ldots,N_i-1,\forall i\in\calI_P^\text{max},
\end{align*}
and average pooling
\begin{align*}
    \calP_i^\mathrm{av} : \tilde{w}_k^{i} = \frac{1}{\ell_i}\sum_{j = 1}^{\ell_i} w^i_{\ell_i (k - 1) + j},~k = 0,\ldots,N_i-1,\forall i\in\calI_P^\text{av},
\end{align*}
where $\calI_P^\text{av}\cup\calI_P^\text{max}\subseteq\calI_C$. As a result, the convolutional layer becomes  $\calC_i=\calP_i\circ\widetilde{\calC}_i$ in case a pooling layer is added or $\calC_i=\widetilde{\calC}_i$ otherwise. Finally, a CNN typically holds \emph{fully connected} layers, which we define as mappings
\begin{equation}\label{eq:FC_layer}
    \begin{split}
        \calL_i:~& w^i=\phi_i(W_{i}w^{i-1}+b_{i})\quad \forall i\in\calI_F\backslash \{l\},\\
        \calL_{l}:~& w^{l} = W_{l}w^{l-1}+b_{l}
    \end{split}
\end{equation}
with weights $W_i\in\bbR^{n_{i}\times n_{i-1}}$, biases $b_i\in\bbR^{n_{i}}$ and activation functions $\phi_i: \bbR^{n_i} \to \bbR^{n_i}$ that are applied element-wise.

The 1D CNN $f_\theta(w^0)=w^{l}$ is hence characterized by $\theta=\{(K^i,b_i)_{i=1}^{p},(W_i,b_i)_{i=p+1}^{l}\}$ and the chosen activation and pooling operations. In this work, we present a direct parameterization for Lipschitz-bounded 1D CNNs \eqref{eq:CNN}. 
\begin{problem}
    \label{problem2}
    Find a parameterization $\kappa\mapsto\theta$ of $f_\theta$ for a predefined Lipschitz bound $\rho> 0$ such that all 1D CNNs parameterized by $\kappa$ are $\rho$-Lipschitz continuous with respect to the $\ell_2$ norm, i.\,e., they satisfy 
    \begin{equation}\label{eq:lipschitz}
        \Vert f_\theta(x)-f_\theta(y)\Vert_2\leq \rho\Vert x-y \Vert_2\quad \forall x,y\in\bbR^n.
    \end{equation}
\end{problem}
In the case of multiple channels $c$, $n=cN$ denotes the stacked up version of the input. Note that $\|\cdot\|_2$ in \eqref{eq:lipschitz} can either be interpreted as the Euclidean norm of a vector-valued input $x$ or as the $\ell_2$ norm of a signal $x$.

To train a Lipschitz-bounded CNN, we minimize a learning objective $\calL(\theta)$, e.\,g., the mean squared error, the cross-entropy loss or, to encourage robustness through the learning objective a tailored loss, e.g. the hinge loss \cite{bethune2022pay}, while at the same time enforcing Lipschitz-boundedness onto the CNN. Rather than solving a training problem subject to a Lipschitz constraint, which can get computationally involved, i.\,e.,
\begin{equation*}
    \min_{\theta} ~ \calL(\theta) \quad \text{s.\,t.} \quad f_\theta~\text{is Lipschitz-bounded},
\end{equation*}
the suggested parameterization $\kappa\mapsto\theta$ allows to solve an unconstrained training problem over $\kappa$ using gradient methods
\begin{equation*}
    \min_{\kappa} ~ \calL(\theta(\kappa)).
\end{equation*}

\section{Preliminaries}\label{sec:prelims}
Before we state the parameterization of Lipschitz-bounded 1D CNNs in Section \ref{sec:parameterization}, we introduce a compact formulation of convolutions in state space and state LMI conditions that certify Lipschitz boundedness and that can be used to estimate the Lipschitz constant for 1D CNNs \cite{pauli2023lipschitz}. In addition, we introduce the Cayley transform used to parameterize orthogonal matrices.
\subsection{State space representation for convolutions}
To formulate LMI conditions for convolutional layers, we can either reformulate the convolutional operation as a fully connected layer characterized by a sparse and redundant Toeplitz matrix \cite{pauli2022neural} that scales with the input dimension or, as suggested in \cite{pauli2023lipschitz}, we can compactly state the convolution, i.\,e., an FIR filter, in state space, completely independent of the input signal length. A possible discrete-time state space representation of the $i$-th convolutional layer \eqref{eq:CNN_layer} with state $x_k^i\in\bbR^{n_{x_i}}$ and state dimension $n_{x_i}=(\ell_i-1)c_{i-1}$ is
\begin{equation}\label{eq:ss_conv}
    \begin{split}
        x_{k+1}^i &= A_i x_k^i + B_i w^{i-1}_k,\\ 
        y_k^i &= C_i x^i_k + D_i w^{i-1}_k + b_i,\\
        w_k^i &=\phi(y_k^i),
    \end{split}
\end{equation}
where
\begin{subequations}\label{eq:ABCD}
\begin{align}\label{eq:AB}
    &A_i =
    \begin{bmatrix}
        0 & I &  &  0 \\
        0  & 0 & \ddots &  \\
        \vdots &  & \ddots & I \\
        0 &    \dots   &   & 0 \\
    \end{bmatrix},
    &B_i &= \begin{bmatrix}
        0 \\
        \vdots \\
        0 \\
        I
    \end{bmatrix},\\\label{eq:CD}
    &C_i =
    \begin{bmatrix}
        K^i_{\ell_i-1} & \dots & K^i_1
    \end{bmatrix},
    &D_i &= K^i_0.
\end{align}
\end{subequations}
Note that 2D convolutions also admit a state space realization, namely as a 2D system \cite{gramlich2022convolutional}, based on which our parameterization can potentially be extended to end-to-end Lipschitz-bounded 2D CNNs.

\subsection{Lipschitz constant estimation}
The Lipschitz constant is a sensitivity measure to changes in the input, which is commonly used to verify robustness for NNs \cite{szegedy2013intriguing}. Since, however, the calculation of the true Lipschitz constant is an NP-hard problem, an accurate upper bound is sought instead. For this purpose, we over-approximate the nonlinear activation functions by their slope-restriction cone \cite{fazlyab2019efficient,pauli2021training}. Commonly used activation functions $\varphi:\bbR\to\bbR$, such as ReLU and tanh, are slope-restricted in $[0,1]$, i.\,e.,
\begin{equation*}
    0 \leq \frac{\varphi(x)-\varphi(y)}{x-y} \leq 1\quad\forall x,y\in\bbR.
\end{equation*}
Based on this \PP{element-wise} property, we formulate an incremental quadratic constraint 
    \begin{equation}\label{eq:slope_restriction}
        \begin{bmatrix}
            \phi(x)-\phi(y)\\
            x-y
        \end{bmatrix}^\top
        \begin{bmatrix}
            -2\Lambda & \Lambda\\
            \Lambda & 0 
        \end{bmatrix}
        \begin{bmatrix}
            \phi(x)-\phi(y)\\
            x-y
        \end{bmatrix}\geq 0~\forall x,y\in\bbR^{n},
    \end{equation}
\PP{which, by the multipliers $\Lambda\in\bbD_+^n$, is a conic combination of the element-wise slope-restriction constraint and a suitable over-approximation of the nonlinearities in all $n$ neurons}. The following theorem states a set of $l$ LMI conditions that serve as a sufficient condition for Lipschitz continuity for 1D CNNs based on the relaxation \eqref{eq:slope_restriction} \cite{pauli2023lipschitz}.
\begin{theorem} [\cite{pauli2023lipschitz}] \label{thm:certification}
    Let $\CNN_\theta$ and $\rho > 0$ be given and let all activation functions be slope-restricted in $[0,1]$. If there exist
    \begin{enumerate}[label=(\roman*)]
        \item  $Q_i\in\bbS^{c_i}$ ($Q_i\in\bbD^{c_i}$ if a convolutional layer contains a maximum pooling layer), $P_i\in\bbS_+^{n_{x_i}}$, and $\Lambda_i\in\bbD_+^{c_i}$ such that $\forall i\in\calI_C$
        \begin{align}\label{eq:LMI-CNN-layers}
            \begin{split}
            \left[\begin{array}{cc|c}
            P_i-A_i^\top P_iA_i  & -A_i^\top P_iB_i & -C_i^\top\Lambda_i\\
            -B_i^\top P_iA_i &  Q_{i-1}-B_i^\top P_iB_i  & -D_i^\top\Lambda_i\\\hline
            -\Lambda_i C_i & -\Lambda_i D_i &2\Lambda_i-Q_i
            \end{array}\right]\succeq 0,
            \end{split}
        \end{align}
    where $Q_0=\tilde{\rho}^2 I$,
        \item $Q_i\in\bbS^{n_i}$ and $\Lambda_i\in\bbD_+^{n_i}$ such that $\forall i = \calI_F\backslash \{l\}$
\begin{align}\label{eq:LMI_FC_layers}
    \begin{split}
    \begin{bmatrix}
        Q_{i-1} & -W_{i}^\top \Lambda_{i}\\
        -\Lambda_{i} W_{i} & 2\Lambda_{i}-Q_{i}
    \end{bmatrix}\succeq 0~ \text{and}~    
    \begin{bmatrix}
        Q_{l-1} & -W_{l}^\top\\
        -W_{l} & I
    \end{bmatrix}\succeq 0,
    \end{split}
\end{align}
    where $Q_p := I_{N_p} \otimes Q_p$,
    \end{enumerate}
    then the $\CNN_\theta$ is $\rho$-Lipschitz continuous with $\rho = \tilde{\rho} \prod_{s\in\calI_{P}^\text{av}} \mu_{s}$, where $\mu_s$ are the Lipschitz constants of the average pooling layers.
\end{theorem}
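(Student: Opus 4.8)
The plan is to establish the Lipschitz bound by composing dissipation-type inequalities layer by layer, tracking a quadratic "incremental storage" through the network. First I would consider two input signals $x,y$ to the CNN and denote by $\Delta w^i$ the difference of the respective signals at the output of layer $i$; for the convolutional layers I would also introduce the incremental state $\Delta x^i_k$ satisfying the linear dynamics \eqref{eq:ss_conv} driven by $\Delta w^{i-1}$. The key observation is that left- and right-multiplying the LMI \eqref{eq:LMI-CNN-layers} by the stacked vector $(\Delta x^i_k,\ \Delta w^{i-1}_k,\ \Delta w^i_k)$ yields, after using the incremental slope restriction \eqref{eq:slope_restriction} to cancel the cross terms involving $\Lambda_i$, a dissipation inequality of the form
\begin{equation*}
    (\Delta x^i_{k+1})^\top P_i \Delta x^i_{k+1} - (\Delta x^i_k)^\top P_i \Delta x^i_k \le (\Delta w^{i-1}_k)^\top Q_{i-1}\Delta w^{i-1}_k - (\Delta w^i_k)^\top Q_i \Delta w^i_k,
\end{equation*}
valid for every time index $k$. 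Summing over $k=0,\dots,N_{i-1}-1$ and using $\Delta x^i_0 = 0$ together with $P_i\succeq 0$ gives the finite-horizon bound $\sum_k (\Delta w^i_k)^\top Q_i \Delta w^i_k \le \sum_k (\Delta w^{i-1}_k)^\top Q_{i-1}\Delta w^{i-1}_k$, i.e. the quadratic form weighted by $Q_i$ is nonincreasing across each convolutional stage. The analogous but simpler computation with \eqref{eq:LMI_FC_layers} handles the fully connected layers, and the final block of \eqref{eq:LMI_FC_layers} converts the $Q_{l-1}$-weighted quantity into $\|\Delta w^l\|_2^2$.

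Next I would chain these inequalities from $i=1$ through $i=l$. Starting from $Q_0=\tilde\rho^2 I$, telescoping through the convolutional layers gives $\sum_k (\Delta w^p_k)^\top Q_p \Delta w^p_k \le \tilde\rho^2 \sum_k \|\Delta w^0_k\|^2 = \tilde\rho^2\|x-y\|_2^2$; the flattening operation $F$ is an isometry that merely reorganizes the signal, and the identification $Q_p := I_{N_p}\otimes Q_p$ in the statement is exactly what makes the quadratic form match across $F$. Continuing through the fully connected layers and applying the last LMI in \eqref{eq:LMI_FC_layers} then yields $\|f_\theta(x)-f_\theta(y)\|_2^2 \le \tilde\rho^2 \|x-y\|_2^2$. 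Finally, the average pooling layers are linear maps with operator norm $\mu_s$ (independent of the signal realization), so inserting each such map contributes a factor $\mu_s^2$ to the bound on the running quadratic form; maximum pooling is handled separately by noting that on the diagonal multiplier $Q_i\in\bbD^{c_i}$ it is $1$-Lipschitz channel-wise in the weighted seminorm, which is why the theorem restricts $Q_i$ to be diagonal in that case. Collecting the factors gives the overall constant $\rho=\tilde\rho\prod_{s\in\calI_P^{\mathrm{av}}}\mu_s$.

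The main obstacle I anticipate is the bookkeeping at the two structural transitions: (a) making precise that the max-pooling stage is contractive in the $Q_i$-weighted seminorm, which genuinely requires $Q_i$ diagonal and nonnegative so that the weighted norm decomposes over channels and commutes with the pointwise maximum, and (b) carefully justifying the state-space dissipation step for the convolution — in particular that the incremental map is exactly the linear system \eqref{eq:ss_conv} with zero initial state, that $\Delta x^i_0=0$ even when signals are indexed on a finite window, and that $P_i\succeq 0$ suffices (rather than $P_i\succ 0$) to drop the terminal term $(\Delta x^i_{N_{i-1}})^\top P_i \Delta x^i_{N_{i-1}}\ge 0$ when telescoping. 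Since this theorem is quoted from \cite{pauli2023lipschitz}, I would lean on that reference for the detailed verification and present here the composition argument as the conceptual core.
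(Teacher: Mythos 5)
Your proposal is correct and follows essentially the same route as the paper: the paper itself does not spell out a proof but defers to \cite{pauli2023lipschitz}, noting only that the argument rests on layer-wise incremental dissipativity with the gain matrices $Q_i$ coupling successive layers, which is exactly the per-time-step dissipation inequality, telescoping, flattening via $I_{N_p}\otimes Q_p$, and pooling rescaling that you describe.
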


The proof of Theorem \ref{thm:certification} is based on the dissipativity of the individual layers of $\CNN_\theta$ that are connected in a feedforward fashion, see \cite{pauli2023lipschitz} for details and the proof. Note that the matrix $Q_{i-1}$, which is a gain matrix, links the $i$-th layer to the previous layer and by this interconnection we can finally analyze Lipschitz continuity of the input-output mapping $w^l=\CNN_\theta(w^0)$ \cite{pauli2023lipschitz}.

Based on Theorem \ref{thm:certification}, we can determine an upper bound on the Lipschitz constant for a given CNN solving a semidefinite program
\begin{equation}\label{eq:Lipschitz_estimation}
    \min_{\rho^2,\Lambda,P,Q} ~\rho^2 \quad \text{s.\,t.} \quad \eqref{eq:LMI-CNN-layers}, \eqref{eq:LMI_FC_layers},
\end{equation}
where $\Lambda=\{\Lambda_i\}_{i\in\calI_C\cup \calI_F\backslash \{l\}}$, $Q=\{Q_i\}_{i\in\calI_C\cup \calI_F\backslash \{l\}}$, $P=\{P_i\}_{i\in\calI_C}$ serve as decision variables together with $\rho^2$. 

\subsection{Cayley transform}
Typically, the Cayley transform maps skew-symmetric matrices to orthogonal matrices and its extended version parameterizes the Stiefel manifold from non-square matrices, which can be useful in designing NNs \cite{wang2023direct,trockman2021orthogonalizing,helfrich2018orthogonal}. 
\begin{lemma}[Cayley transform \cite{jauch2020random}]\label{lem:Cayley}
    For all $Y\in\bbR^{n\times n}$ and $Z\in\bbR^{m\times n}$ the Cayley transform
\begin{equation*}\label{eq:Cayley}
    \begin{split}
        \Cayley\left(
    \begin{bmatrix}
        Y \\
        Z
    \end{bmatrix}\right)=
    \begin{bmatrix}
        U \\
        V
    \end{bmatrix}=
    \begin{bmatrix}
        (I+M)^{-1}(I-M) \\
        2Z(I+M)^{-1}
    \end{bmatrix},
    \end{split}
\end{equation*}
where $M=Y-Y^\top+Z^\top Z$, yields matrices $U\in\bbR^{n\times n}$ and $V\in\bbR^{m\times n}$ that satisfy $U^\top U + V^\top V =  I$.
\end{lemma}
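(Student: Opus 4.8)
The plan is to establish the orthogonality relation $U^\top U + V^\top V = I$ by a short direct computation, after first dispatching well-definedness. The one structural fact driving the whole argument is that the skew-symmetric part of $Y$ cancels upon symmetrizing $M$, so that $M + M^\top = 2Z^\top Z$.

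First I would verify that $I+M$ is invertible, so that $U$ and $V$ are actually defined. From $M + M^\top = (Y - Y^\top) + (Y^\top - Y) + 2Z^\top Z = 2Z^\top Z$ and $x^\top(Y-Y^\top)x = 0$, one obtains for every $x\in\bbR^n$
\[
x^\top(I+M)x = \Vert x\Vert_2^2 + x^\top Z^\top Z x = \Vert x\Vert_2^2 + \Vert Zx\Vert_2^2 \ge \Vert x\Vert_2^2,
\]
so $(I+M)x = 0$ forces $x = 0$ and $I+M$ is invertible over $\bbR$. I would also record the elementary commutation $(I+M)^{-1}(I-M) = (I-M)(I+M)^{-1}$, which follows from $(I-M)(I+M) = I - M^2 = (I+M)(I-M)$, together with its transpose $(I+M^\top)^{-1}(I-M^\top) = (I-M^\top)(I+M^\top)^{-1}$.

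The core of the proof is the identity
\[
U^\top U + V^\top V = (I+M^\top)^{-1}\bigl[(I-M^\top)(I-M) + 4Z^\top Z\bigr](I+M)^{-1},
\]
which I would obtain by substituting $U = (I+M)^{-1}(I-M)$ and $V = 2Z(I+M)^{-1}$, transposing (using $(I+M)^{-\top} = (I+M^\top)^{-1}$), and applying the commutation to pull both factors $(I+M)^{-1}$ and $(I+M^\top)^{-1}$ to the outside. Consequently $U^\top U + V^\top V = I$ is equivalent to $(I-M^\top)(I-M) + 4Z^\top Z = (I+M^\top)(I+M)$; expanding both products and cancelling the common terms $I$ and $M^\top M$ collapses this to $M + M^\top = 2Z^\top Z$, precisely the computation from the invertibility step. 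This finishes the argument, and taking $Z$ absent recovers the classical fact that $(I+M)^{-1}(I-M)$ is orthogonal when $M$ is skew-symmetric.

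I do not anticipate a real obstacle: the whole proof is a few lines of linear algebra. The two places that need a bit of care are (i) actually justifying invertibility of $I+M$ over $\bbR$ instead of assuming it, and (ii) the transpose bookkeeping in the core identity — in particular, $U^\top U$ must not be treated as a single quadratic form in a symmetric matrix, since $M$ is not symmetric, which is exactly where the commutation step is needed. A more conceptual but longer alternative would embed the genuinely skew-symmetric matrix $\begin{bmatrix} Y - Y^\top & Z^\top \\ -Z & 0\end{bmatrix}\in\bbR^{(n+m)\times(n+m)}$ into the standard Cayley transform $\Cayley$ and read off $\begin{bmatrix} U \\ V\end{bmatrix}$ as the first block column of the resulting orthogonal matrix — the relevant Schur complement being exactly $I+M$ — which explains the shape of the formulas but is heavier than the direct route.
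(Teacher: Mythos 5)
Your proof is correct. Note that the paper itself gives no proof of this lemma: it is quoted from the cited reference, and the only justification supplied in the text is the remark that $I+M$ is nonsingular because $1\leq\lambda_{\min}(I+Z^\top Z)\leq \mathrm{Re}\,\lambda_{\min}(I+M)$. Your quadratic-form argument, $x^\top (I+M)x=\Vert x\Vert_2^2+\Vert Zx\Vert_2^2>0$ for $x\neq 0$ (using $x^\top(Y-Y^\top)x=0$), establishes exactly the same fact in an equally clean way, so there is no gap on well-definedness. The remainder of your argument is the standard direct verification: the commutation $(I+M)^{-1}(I-M)=(I-M)(I+M)^{-1}$ is correctly justified via $(I-M)(I+M)=I-M^2=(I+M)(I-M)$, it is indeed needed because $M$ is not symmetric, and with it one gets $U^\top U+V^\top V=(I+M^\top)^{-1}\bigl[(I-M^\top)(I-M)+4Z^\top Z\bigr](I+M)^{-1}$, which equals $I$ precisely because expanding $(I+M^\top)(I+M)-(I-M^\top)(I-M)=2(M+M^\top)=4Z^\top Z$. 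In short, you have supplied a complete elementary proof of a statement the paper only cites; your sketch of the alternative embedding of $\begin{bmatrix} Y-Y^\top & Z^\top\\ -Z & 0\end{bmatrix}$ into the square Cayley transform is also consistent with how the extended transform is usually derived, but the direct computation suffices.
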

Note that $I+M$ is nonsingular since $1\leq\lambda_\mathrm{min}(I+Z^\top Z)\leq Re(\lambda_\mathrm{min}(I+M))$. 

\section{Direct parameterization}\label{sec:parameterization}
While \eqref{eq:Lipschitz_estimation} analyzes Lipschitz continuity for given 1D CNNs, it also is desirable to train robust CNNs, i.\,e., $\rho$-Lipschitz bounded CNNs where the robustness level $\rho$ is chosen by the user. 
In this section, we introduce a layer-wise parameterization for 1D CNNs \eqref{eq:CNN} that renders the input-output mapping Lipschitz continuous. We first discuss a parameterization for fully connected layers that satisfy \eqref{eq:LMI_FC_layers} by design, using a similar construction to \cite{wang2023direct}. Our key contribution then is the parameterization of convolutional layers, which is carried out in two steps. In a first step, we establish a parameterization of $P_i$ that renders the left upper block in \eqref{eq:LMI-CNN-layers} positive definite using the controllability Gramian and afterwards, we introduce the parameterization for convolutional layers that by design satisfy \eqref{eq:LMI-CNN-layers}.
\subsection{Fully connected layers}
In the following, we present a mapping $\kappa_i\mapsto(W_i,b_i)$ from unconstrained variables $\kappa_i$ that renders \eqref{eq:LMI_FC_layers} feasible by design.
\begin{theorem}
    Fully connected layers \eqref{eq:FC_layer} parameterized by
    \begin{subequations}\label{eq:parameterization_FNN_all}
        \begin{align}\label{eq:parameterization_FNN}
            W_i &=  \sqrt{2}\Gamma_i^{-1}V_i^\top L_{i-1}, & b_i\in\bbR^{n_i}, & \quad \forall i\in\calI_F\backslash\{l\},\\\label{eq:parameterization_FNN_last}
            W_l &=  V_l^\top L_{l-1}, & b_l\in\bbR^{n_l}, &
        \end{align}
    \end{subequations}
wherein
    \begin{equation*}
        \Gamma_i=\diag(\gamma_i),~
        L_i=\sqrt{2} U_{i}\Gamma_{i},~
        \begin{bmatrix}
            U_i \\
            V_i
        \end{bmatrix}=
        \Cayley\left(
        \begin{bmatrix}
            Y_i\\
            Z_i
        \end{bmatrix}
        \right),
    \end{equation*}
    satisfy \eqref{eq:LMI_FC_layers}. This yields the mappings $(Y_i,Z_i,\gamma_i,b_i)\mapsto(W_i,b_i)$, $i\in\calI_F\backslash\{l\}$, and $(Y_l,Z_l,b_l)\mapsto(W_l,b_l)$, respectively, where $Y_i\in\bbR^{n_i\times n_i}$, $Z_i\in\bbR^{n_{i-1}\times n_i}$, $b_i\in\bbR^{n_i}$, $i\in\calI_F$, $\gamma_i\in\bbR^{n_i}$, $i\in\calI_F\backslash\{l\}$ are free variables.
\end{theorem}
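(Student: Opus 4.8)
The plan is to exhibit explicit certificate variables $Q_i,\Lambda_i$ for \eqref{eq:LMI_FC_layers} and to recognize the resulting block matrices as Gram matrices (plus, in one case, a harmless positive semidefinite term). I would set $\Lambda_i := \Gamma_i^2$, which lies in $\bbD_+^{n_i}$ precisely because $\Gamma_i$ must be invertible for $W_i$ in \eqref{eq:parameterization_FNN} to be well defined, and $Q_i := L_i^\top L_i \in \bbS^{n_i}$ for every $i\in\calI_F\setminus\{l\}$, taking the incoming gain $Q_{i-1}$ of each layer to be of the same factored form $L_{i-1}^\top L_{i-1}$. For the first fully connected layer $i=p+1$ the incoming gain is the lifted matrix $Q_p = I_{N_p}\otimes Q_p$ coming from the convolutional part; consistency then just requires $L_p = I_{N_p}\otimes L_p^{\mathrm{conv}}$ with $Q_p = (L_p^{\mathrm{conv}})^\top L_p^{\mathrm{conv}}$, the only bookkeeping to reconcile with the convolutional parameterization.

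The key algebraic step uses the Cayley identity $U_i^\top U_i + V_i^\top V_i = I$ from Lemma~\ref{lem:Cayley}. Since $\Gamma_i$ is diagonal, $L_i^\top L_i = 2\Gamma_i U_i^\top U_i \Gamma_i$, hence
\begin{equation*}
    2\Lambda_i - Q_i = 2\Gamma_i^2 - 2\Gamma_i U_i^\top U_i\Gamma_i = 2\Gamma_i\big(I - U_i^\top U_i\big)\Gamma_i = 2\Gamma_i V_i^\top V_i\Gamma_i,
\end{equation*}
and also $\Lambda_i W_i = \sqrt{2}\,\Gamma_i^2\Gamma_i^{-1}V_i^\top L_{i-1} = \sqrt{2}\,\Gamma_i V_i^\top L_{i-1}$. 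Substituting into the first LMI of \eqref{eq:LMI_FC_layers} and factoring yields
\begin{equation*}
    \begin{bmatrix}
        Q_{i-1} & -W_i^\top\Lambda_i\\ -\Lambda_i W_i & 2\Lambda_i - Q_i
    \end{bmatrix}
    =
    \begin{bmatrix} L_{i-1}^\top\\ -\sqrt{2}\,\Gamma_i V_i^\top \end{bmatrix}
    \begin{bmatrix} L_{i-1} & -\sqrt{2}\,V_i\Gamma_i \end{bmatrix}\succeq 0 ,
\end{equation*}
which disposes of all layers $i\in\calI_F\setminus\{l\}$. For the output layer, the same idea applied to $W_l = V_l^\top L_{l-1}$ with $Q_{l-1}=L_{l-1}^\top L_{l-1}$ and $I = V_l^\top V_l + U_l^\top U_l$ gives
\begin{equation*}
    \begin{bmatrix} Q_{l-1} & -W_l^\top\\ -W_l & I \end{bmatrix}
    = \begin{bmatrix} L_{l-1}^\top\\ -V_l^\top \end{bmatrix}\begin{bmatrix} L_{l-1} & -V_l \end{bmatrix}
    + \begin{bmatrix} 0 & 0\\ 0 & U_l^\top U_l \end{bmatrix}\succeq 0 ,
\end{equation*}
a sum of a Gram matrix and a positive semidefinite block. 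This establishes \eqref{eq:LMI_FC_layers} for the whole fully connected part, so the maps $(Y_i,Z_i,\gamma_i,b_i)\mapsto(W_i,b_i)$ and $(Y_l,Z_l,b_l)\mapsto(W_l,b_l)$ land in the feasible set, which is the assertion.

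I do not anticipate a real obstacle: the content is entirely the observation that choosing $Q_i = L_i^\top L_i$ with $L_i = \sqrt{2}\,U_i\Gamma_i$ converts $2\Lambda_i - Q_i$ into $2\Gamma_i V_i^\top V_i\Gamma_i$ by orthogonality, after which both blocks collapse to (essentially) Gram matrices. The one place requiring care is matching the incoming gain of the first fully connected layer with the Kronecker-lifted $Q_p$ returned by the convolutional parameterization, i.e.\ that the $L_p$ used in $W_{p+1}=\sqrt{2}\,\Gamma_{p+1}^{-1}V_{p+1}^\top L_p$ is genuinely a factor of $I_{N_p}\otimes Q_p$; and, as a minor technical point, $\gamma_i$ should be restricted to vectors with nonzero entries (or reparameterized, e.g.\ through an exponential) so that $\Gamma_i$ is invertible and $\Lambda_i\succ 0$.
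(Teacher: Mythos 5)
Your proposal is correct: the certificate is the same as the paper's, namely $\Lambda_i=\Gamma_i^2$, $Q_i=L_i^\top L_i$ with $L_i=\sqrt{2}\,U_i\Gamma_i$, and the whole argument rests on the Cayley identity $U_i^\top U_i+V_i^\top V_i=I$ from Lemma~\ref{lem:Cayley}, exactly as in the paper. The only difference is the last verification step: the paper rewrites the identity as $Q_i+\Lambda_i W_i Q_{i-1}^{-1}W_i^\top\Lambda_i=2\Lambda_i$ (and $I-W_lQ_{l-1}^{-1}W_l^\top=U_l^\top U_l$ for the output layer) and then invokes the Schur complement, which implicitly uses $Q_{i-1}\succ 0$, i.e.\ nonsingular $L_{i-1}$; you instead exhibit the LMI blocks directly as a Gram matrix $\bigl[\begin{smallmatrix} L_{i-1}^\top \\ -\sqrt{2}\,\Gamma_iV_i^\top\end{smallmatrix}\bigr]\bigl[\begin{smallmatrix} L_{i-1} & -\sqrt{2}\,V_i\Gamma_i\end{smallmatrix}\bigr]$, respectively a Gram matrix plus the positive semidefinite block $U_l^\top U_l$. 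This is slightly more elementary (no Schur complement) and does not even require $L_{i-1}$ to be invertible, whereas the paper relegates that nonsingularity to a standing assumption (Remark~3); your observation about the Kronecker-lifted gain $Q_p=I_{N_p}\otimes Q_p$ at the convolutional-to-dense interface and about restricting $\gamma_i$ to have nonzero entries (so $\Gamma_i$ is invertible and $\Lambda_i\in\bbD_+^{n_i}$) is exactly the bookkeeping the paper handles via that remark and Theorem~\ref{thm:certification}.
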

\begin{proof}
According to Lemma \ref{lem:Cayley}, $U_i$ and $V_i$ satisfy $U_i^\top U_i + V_i^\top V_i =  I$, wherein we insert the parameterization \eqref{eq:parameterization_FNN} and $U_i=\frac{1}{\sqrt{2}}L_i\Gamma_i^{-1}$ to obtain 
\begin{equation*}
    \frac{1}{2}(\Gamma_i^{-1}L_i^\top L_i \Gamma_i^{-1} + \Gamma_i W_i^\top L_{i-1}^{-1}  L_{i-1}^{-\top} W_i \Gamma_i) =  I.
\end{equation*}
With $Q_i=L_i^\top L_i$ and $\Lambda_i=\Gamma_i^\top\Gamma_i$, we further obtain
\begin{equation*}
    Q_i + \Lambda_i W_i Q_{i-1}^{-1} W_i^\top \Lambda_i =  2\Lambda_i,
\end{equation*}
which implies $2\Lambda_i - Q_i - \Lambda_i W_i Q_{i-1}^{-1} W_i^\top \Lambda_i \succeq 0$. 
Next, we apply the Schur complement, which yields the left inequality in \eqref{eq:LMI_FC_layers}.
The last fully connected layer is a special case that does not contain an activation function. Inserting the parameterization \eqref{eq:parameterization_FNN_last} gives
\begin{equation*}
    U_l^\top U_l + V_l^\top V_l = U_l^\top U_l + W_l Q_{l-1}^{-1}W_l^\top=I,
\end{equation*}
which implies $I-W_l Q_{l-1}^{-1}W_l^\top= U_l^\top U_l \succeq 0$, which by the Schur complement satisfies the right inequality in \eqref{eq:LMI_FC_layers}.
\end{proof}
Note that the connection between the auxiliary matrices $L_i$ in \eqref{eq:parameterization_FNN_all} and $Q_i$ in \eqref{eq:LMI_FC_layers} is $L_i^\top L_i=Q_i$ and the relation between the multiplier matrices $\Lambda_i$ in \eqref{eq:slope_restriction} / \eqref{eq:LMI_FC_layers} and $\Gamma_i$ in \eqref{eq:parameterization_FNN_all} is $\Gamma_i^\top\Gamma_i=\Lambda_i$.
\begin{remark}
     Throughout the paper, we assume that $\Gamma_i$ and $L_i$ are nonsingular. In our experiments, this was always the case. However, there also are tricks to enforce this property, e.g., by choosing $\Gamma_i=\diag(e^{\gamma_i})$ \cite{wang2023direct}.
\end{remark}
\begin{remark}
    Our parameterization \eqref{eq:parameterization_FNN_all} is equivalent to the one established in \cite{wang2023direct}, where they show that it is necessary and sufficient, i.\,e., the fully connected layers \eqref{eq:FC_layer} satisfy \eqref{eq:LMI_FC_layers} if and only if the weights can be parameterized by \eqref{eq:parameterization_FNN_all}.
\end{remark}

\subsection{Parameterization by the controllability Gramian}
In this section, we make use of the controllability Gramian of \eqref{eq:ss_conv} to parameterize convolutional layers, which to the best knowledge of the authors, has thus far not appeared in the context of parameterizing NNs. For that purpose, we introduce
\begin{equation}\label{eq:matrixF}
F_i :=
\left[\begin{array}{cc}
    P_i-A_i^\top P_iA_i  & -A_i^\top P_iB_i\\
    -B_i^\top P_iA_i &  Q_{i-1}-B_i^\top P_iB_i 
\end{array}\right]\succ 0,
\end{equation}
which is the left upper block in \eqref{eq:LMI-CNN-layers} and further, we introduce $\widehat{C}_i :=
        \begin{bmatrix}
            C_i & D_i
        \end{bmatrix}$, which simplifies the notation of \eqref{eq:LMI-CNN-layers} to
\begin{equation}\label{eq:LMI-CNN-layers2}
\begin{bmatrix}
    F_{i} & -\widehat{C}_i^\top\Lambda_i\\
    -\Lambda_i\widehat{C}_i & 2\Lambda_i-Q_i
\end{bmatrix}\succeq 0.
\end{equation}
        
We note that the LMI \eqref{eq:LMI-CNN-layers2} and the left LMI in \eqref{eq:LMI_FC_layers} share a similar structure. The right lower block is the same in both LMIs. Beside the biases $b_i$, the parameters to be trained in the CNN layers are collected in $\widehat{C}_i$, cmp. \eqref{eq:ABCD}, whereas the parameters $W_i$ characterize the fully connected layers. In the off-diagonal blocks of the respective LMIs \eqref{eq:LMI-CNN-layers2} and \eqref{eq:LMI_FC_layers} $\widehat{C}_i$ and $W_i$ appear respectively multiplied by $\Lambda_i$. The only difference appears in the left upper  blocks of the LMIs. While in LMI \eqref{eq:LMI_FC_layers} for fully connected layers, we here have $Q_{i-1}=L_{i-1}^\top L_{i-1}\succ 0$ for nonsingular $L_{i-1}$, 
LMI \eqref{eq:LMI-CNN-layers2} for convolutional layers here contains $F_i$, that depends on $Q_{i-1}$. To render $F_i$ positive definite, we parameterize $P_i$ as follows, using the controllability Gramian.
\begin{lemma}\label{lem:Gramian}
For some $\varepsilon>0$ and all $H_i\in\bbR^{n_{x_i}\times n_{x_i}}$, the matrix $P_i=X_i^{-1}$ with
\begin{equation}\label{eq:Gramian}
    X_i = \sum_{k=0}^{n_{x_i}-c_{i-1}} A_i^k(B_i{Q}_{i-1}^{-1}B_i^\top+H_i^\top H_i+\epsilon I)(A_i^\top)^{k},
\end{equation}
renders \eqref{eq:matrixF} feasible.
\end{lemma}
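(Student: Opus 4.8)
The plan is to identify $X_i$ as the solution of a discrete Lyapunov equation (a controllability Gramian) and then to certify $F_i \succ 0$ by two applications of the Schur complement, eliminating $F_i$ against the block $X_i = P_i^{-1}$ in one direction and against $\diag(P_i,Q_{i-1})$ in the other.

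First I would exploit the shift structure of $A_i$ in \eqref{eq:AB}: $A_i$ is nilpotent with $A_i^k = 0$ for all $k \ge \ell_i - 1$, and since $n_{x_i} - c_{i-1} = (\ell_i - 2)c_{i-1} \ge \ell_i - 2$, the truncated sum in \eqref{eq:Gramian} already contains every nonzero power of $A_i$. Abbreviating $R_i := B_i Q_{i-1}^{-1}B_i^\top + H_i^\top H_i + \epsilon I$ — which is positive definite because $\epsilon I \succ 0$, $H_i^\top H_i \succeq 0$, and $Q_{i-1}\succ 0$ under the standing nonsingularity assumption (with $Q_0 = \tilde\rho^2 I$) — we therefore have $X_i = \sum_{k=0}^{\infty} A_i^k R_i (A_i^\top)^k$. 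A telescoping computation gives $X_i - A_i X_i A_i^\top = R_i$, i.e., $X_i$ is the (unique) solution of this discrete Lyapunov equation, equivalently the controllability Gramian associated with $(A_i, R_i^{1/2})$. In particular $X_i \succeq R_i \succ 0$, so $P_i = X_i^{-1} \succ 0$ is well defined.

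Next I would rewrite \eqref{eq:matrixF} as $F_i = \diag(P_i, Q_{i-1}) - \begin{bmatrix} A_i^\top \\ B_i^\top \end{bmatrix} P_i \begin{bmatrix} A_i & B_i \end{bmatrix}$ and embed it into the block matrix $\mathcal{M} := \begin{bmatrix} \diag(P_i,Q_{i-1}) & \begin{bmatrix} A_i^\top \\ B_i^\top\end{bmatrix} \\ \begin{bmatrix} A_i & B_i \end{bmatrix} & X_i \end{bmatrix}$, whose lower-right block equals $P_i^{-1}$. The Schur complement of $\mathcal{M}$ with respect to this block is exactly $F_i$, so (since $X_i\succ0$) $\mathcal{M}\succ 0 \iff F_i\succ 0$; the Schur complement with respect to the upper-left block $\diag(P_i,Q_{i-1})\succ 0$ is $X_i - A_i X_i A_i^\top - B_i Q_{i-1}^{-1}B_i^\top$. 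Combining the two equivalences, $F_i \succ 0$ is equivalent to $X_i - A_i X_i A_i^\top - B_i Q_{i-1}^{-1}B_i^\top \succ 0$. Finally, substituting the Lyapunov equation yields $X_i - A_i X_i A_i^\top - B_i Q_{i-1}^{-1}B_i^\top = R_i - B_i Q_{i-1}^{-1}B_i^\top = H_i^\top H_i + \epsilon I \succeq \epsilon I \succ 0$, which closes the argument; note this holds for every $\varepsilon>0$ and every $H_i$, and that the $\epsilon I$ term is precisely what upgrades the conclusion to a strict inequality.

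I do not expect a genuine obstacle here. The two points that need care are the bookkeeping that the finite sum \eqref{eq:Gramian} really captures the whole Gramian (nilpotency index of the block shift), and choosing the Schur-complement direction that avoids having to verify the lower-right block $Q_{i-1} - B_i^\top P_i B_i \succ 0$ of \eqref{eq:LMI-CNN-layers} directly; working through the auxiliary matrix $\mathcal{M}$ sidesteps that.
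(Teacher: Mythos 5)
Your proposal is correct and follows essentially the same route as the paper: identify the truncated sum with the controllability Gramian solving the Lyapunov equation $X_i - A_i X_i A_i^\top - B_i Q_{i-1}^{-1}B_i^\top = H_i^\top H_i + \epsilon I \succ 0$, and then pass through the same $3\times 3$ block matrix (your $\mathcal{M}$) via two Schur complements to conclude $F_i \succ 0$ with $P_i = X_i^{-1}$. The only differences are cosmetic (your explicit bound $X_i \succeq R_i$ and the slightly sharper nilpotency index), so there is nothing to add.
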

\begin{proof}
The matrix $A_i$ is a nilpotent matrix, i.\,e., $A^{n_{x_i}-c_{i-1}+k}=0$ $\forall k\geq1$, such that \eqref{eq:Gramian} can be written as
\begin{equation*}
    X_i = \sum_{k=0}^{\infty} A_i^k(B_i{Q}_{i-1}^{-1}B_i^\top+H_i^\top H_i+\epsilon I)(A_i^\top)^{k},
\end{equation*}
which corresponds to the controllability Gramian of the linear time-invariant system characterized by $(A_i,B_i)$ as defined in \eqref{eq:AB}, i.\,e., the unique solution $X_i\succ 0$ to the Lyapunov equation
\begin{equation}\label{eq:Lyapunov}
    X_i-A_iX_iA_i^\top-B_iQ_{i-1}^{-1}B_i^\top=H_i^\top H_i+\epsilon I\succ 0.
\end{equation}
Note that $X_i$ is positive definite by design, given that $Q_{i-1}=L_{i-1}^\top L_{i-1}\succ 0$ such that $B_i{Q}_{i-1}^{-1}B_i^\top+H_i^\top H_i+\epsilon I$ is positive definite. Next, we apply the Schur complement to \eqref{eq:Lyapunov} to obtain
\begin{equation*}
\begin{bmatrix}
    X_i^{-1}  & 0         & A_i^\top\\
    0         & Q_{i-1}   & B_i^\top \\
    A_i       & B_i       & X_i
\end{bmatrix}\succ 0.
\end{equation*}
Now inserting $P_i = X_i^{-1}$ and again applying the Schur complement yields \eqref{eq:matrixF}.
\end{proof}


\subsection{Convolutional layers}
In this subsection, we present a direct parameterization for convolutional layers such that they satisfy \eqref{eq:LMI-CNN-layers2} by design. Our parameterization of convolution kernels $K^i\in\bbR^{c_i\times c_{i-1}\times \ell_i}$, or equivalently $\widehat{C}_i\in\bbR^{c_i\times \ell_ic_{i-1}}$, is independent of the input dimension $N_i$ whereas other approaches design a parameterization for Lipschitz-bounded convolutions and CNNs in the Fourier domain which involves the inversion of $N_i$ matrices \cite{trockman2021orthogonalizing,wang2023direct}.
\begin{theorem}
    Convolutional layers \eqref{eq:CNN_layer} that contain an average pooling layer or no pooling layer parameterized by
    \begin{equation}\label{eq:parameterization_CNN}
        \widehat{C}_i
        =  \sqrt{2}\Gamma_i^{-1}V_i^\top L^F_{i},~b_i\in\bbR^{c_i},~\forall i\in\calI_C\backslash \calI_C^{\mathrm{max}},
    \end{equation}
wherein
\begin{equation*}
    \Gamma_i=\diag(\gamma_i),~
    \begin{bmatrix}
        U_i \\
        V_i
    \end{bmatrix}=
    \Cayley\left(
    \begin{bmatrix}
        Y_i\\
        Z_i
    \end{bmatrix}
    \right),~
    L_i^F=\chol(F_i),
\end{equation*}
satisfy \eqref{eq:LMI-CNN-layers}. Here, $\chol(\cdot)$ denotes the Cholesky decomposition, $Q_i=L_i^\top L_i,~L_0 =\rho I,~L_i=\sqrt{2} U_{i}\Gamma_{i}$, $F_i$ is given by \eqref{eq:matrixF} with $P_i$ parameterized from $Q_{i-1}$ and $H_i$ using \eqref{eq:Gramian}. The free variables beside $b_i$ are $Y_i\in\bbR^{c_{i}\times c_{i}}$, $Z_i\in\bbR^{\ell_i c_{i-1}\times c_{i}}$, $H_i\in\bbR^{n_{x_i}\times n_{x_i}}$, $\gamma_i\in\bbR^{c_{i}}$, $i\in\calI_C\backslash \calI_C^{\mathrm{max}}$, which yields the mapping $(Y_i,Z_i,H_i,\gamma_i,b_i)\mapsto(K^i,b_i)$.
\end{theorem}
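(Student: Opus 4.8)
The plan is to mirror the proof for the fully connected case (Theorem for \eqref{eq:parameterization_FNN_all}), but now with $F_i$ playing the role that $Q_{i-1}$ played there. First I would observe that, by Lemma \ref{lem:Gramian}, the choice $P_i=X_i^{-1}$ with $X_i$ as in \eqref{eq:Gramian} guarantees $F_i\succ 0$, so the Cholesky factor $L_i^F=\chol(F_i)$ exists and is nonsingular, giving $F_i=(L_i^F)^\top L_i^F$. Hence the left upper block of \eqref{eq:LMI-CNN-layers2} is already taken care of, and what remains is exactly to reproduce the Schur-complement argument from the fully connected case with $\widehat{C}_i$ in place of $W_i$ and $L_i^F$ in place of $L_{i-1}$.

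Next I would unpack the Cayley identity. By Lemma \ref{lem:Cayley}, the blocks $U_i\in\bbR^{c_i\times c_i}$ and $V_i\in\bbR^{\ell_i c_{i-1}\times c_i}$ satisfy $U_i^\top U_i+V_i^\top V_i=I$. Substituting the parameterization \eqref{eq:parameterization_CNN}, namely $\widehat{C}_i=\sqrt2\,\Gamma_i^{-1}V_i^\top L_i^F$, together with $U_i=\tfrac{1}{\sqrt2}L_i\Gamma_i^{-1}$ (which is the defining relation $L_i=\sqrt2\,U_i\Gamma_i$), the identity becomes
\begin{equation*}
    \tfrac12\bigl(\Gamma_i^{-1}L_i^\top L_i\Gamma_i^{-1}+\Gamma_i\widehat{C}_i(L_i^F)^{-1}(L_i^F)^{-\top}\widehat{C}_i^\top\Gamma_i\bigr)=I.
\end{equation*}
Using $Q_i=L_i^\top L_i$, $\Lambda_i=\Gamma_i^\top\Gamma_i$, and $F_i^{-1}=(L_i^F)^{-1}(L_i^F)^{-\top}$, multiplying left and right by $\Gamma_i$ (equivalently, by $\sqrt{\Lambda_i}$ since $\Gamma_i$ is diagonal) yields $Q_i+\Lambda_i\widehat{C}_i F_i^{-1}\widehat{C}_i^\top\Lambda_i=2\Lambda_i$, hence $2\Lambda_i-Q_i-\Lambda_i\widehat{C}_i F_i^{-1}\widehat{C}_i^\top\Lambda_i\succeq 0$. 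Applying the Schur complement with respect to the (positive definite) block $F_i$ then gives exactly \eqref{eq:LMI-CNN-layers2}, and unstacking $\widehat{C}_i=[\,C_i\ D_i\,]$ via \eqref{eq:CD} back into $K^i$ recovers \eqref{eq:LMI-CNN-layers}. The bias $b_i$ is free since it does not enter the LMI.

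I would also note the self-referential subtlety that needs a sentence of justification: $F_i$ depends on $P_i$, which via \eqref{eq:Gramian} depends on $Q_{i-1}=L_{i-1}^\top L_{i-1}$ and on the free variable $H_i$ only, not on $\widehat{C}_i$; so the construction is well-posed — one first forms $Q_{i-1}$ from the previous layer, then $P_i$ and $F_i$, then $L_i^F$, then $\widehat{C}_i$. Likewise $Q_i=L_i^\top L_i$ with $L_i=\sqrt2\,U_i\Gamma_i$ depends only on $(Y_i,Z_i,\gamma_i)$ through the Cayley transform, so the feed-forward chain $Q_0=\rho^2 I\to P_1\to F_1\to\widehat{C}_1,Q_1\to P_2\to\cdots$ closes consistently, which is what Theorem \ref{thm:certification} needs.

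The main obstacle is not the algebra — it is essentially identical to the fully connected proof — but rather the bookkeeping of nonsingularity and of which quantities feed which: one must invoke the standing assumption (Remark following the fully connected theorem) that $\Gamma_i$ and $L_i$ are nonsingular, and one must be careful that $L_i^F$ is genuinely invertible, which is where Lemma \ref{lem:Gramian} is essential ($F_i\succ 0$ strictly, guaranteed by the $+\epsilon I$ term in \eqref{eq:Lyapunov}, so $\chol(F_i)$ is well-defined and full rank). Apart from that, the proof is a direct transcription of the fully connected argument.
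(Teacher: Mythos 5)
Your proposal is correct and follows essentially the same route as the paper: invoke Lemma \ref{lem:Gramian} for $F_i\succ 0$ and the existence of $L_i^F=\chol(F_i)$, substitute the parameterization into the Cayley identity $U_i^\top U_i+V_i^\top V_i=I$ to obtain $Q_i+\Lambda_i\widehat{C}_iF_i^{-1}\widehat{C}_i^\top\Lambda_i=2\Lambda_i$, and conclude \eqref{eq:LMI-CNN-layers2} via the Schur complement. Your added remarks on the well-posed ordering of the construction and the nonsingularity of $\Gamma_i$, $L_i$, $L_i^F$ are consistent with the paper's standing assumptions and only make the bookkeeping more explicit.
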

\begin{proof}
The matrices $U_i$ and $V_i$ satisfy $U_i^\top U_i + V_i^\top V_i =  I$, wherein we insert the parameterization \eqref{eq:parameterization_CNN} to obtain
\begin{equation*}
    \frac{1}{2}(\Gamma_i^{-1}L_i^\top L_i \Gamma_i^{-1} + \Gamma_i W_i^\top {L^F_{i-1}}^{-1}  {L^F_{i-1}}^{-\top} W_i \Gamma_i) =  I.
\end{equation*}
Lemma \ref{lem:Gramian} ensures positive definiteness of $F_i$, i.\,e., its Cholesky decomposition exists, and we insert $Q_i=L_i^\top L_i$, $F_i={L_i^F}^\top L_i^F$, $\Lambda_i=\Gamma_i^\top\Gamma_i$, to further obtain
\begin{equation*}
    Q_i + \Lambda_i \widehat{C}_i F_{i}^{-1} \widehat{C}_i^\top \Lambda_i =  2\Lambda_i,
\end{equation*}
which implies $2\Lambda_i - Q_i - \Lambda_i \widehat{C}_i F_{i}^{-1} \widehat{C}_i^\top \Lambda_i \succeq 0$. Next, we apply the Schur complement and obtain \eqref{eq:LMI-CNN-layers2} which corresponds to \eqref{eq:LMI-CNN-layers}.
\end{proof}
To account for average pooling layers present in the CNN, we rescale the desired Lipschitz bound with the product of the Lipschitz bounds of the average pooling layers, i.\,e.,  $\tilde{\rho}=\rho/\Pi_{s\in\calI_P^\text{av}}$, cmp. Theorem \ref{thm:certification}, and then we use \eqref{eq:parameterization_CNN} to parameterize the convolutional layer.

\subsection{Maximum pooling layers}
In case there is a maximum pooling layer in $\calC_i$, i.\,e. $\calC_i=\calP_i^\mathrm{max}\circ \widetilde{\calC}_i$, we need to parameterize $Q_i$ as a diagonal matrix, cmp. Theorem \ref{thm:certification}, which also affects the parameterization. This constraint comes from the incremental quadratic constraint used to include maximum pooling layers \cite{pauli2023lipschitz}. While average pooling is a linear operation and allows for full matrices $Q_i$, the nonlinearity of the maximum pooling layer is the reason for the additional diagonality constraint.
\begin{corollary}\label{cor:maxpooling}
    Convolutional layers \eqref{eq:FC_layer} that contain a maximum pooling layer parameterized by
    \begin{equation}\label{eq:parameterization_CNN2}
        \widehat{C}_i =  \Lambda_i^{-1}\widetilde{\Gamma}_i\widetilde{U}_i^\top L^F_{i},~b_i\in\bbR^{c_i},~\forall \calI_C^\mathrm{max}
    \end{equation}
satisfy \eqref{eq:LMI-CNN-layers}, wherein
\begin{align*}
    \Lambda_i = \frac{1}{2}\left(\widetilde{\Gamma}_i^\top\widetilde{\Gamma}_i+Q_i\right),~
    \widetilde{\Gamma}_i\!=\!\diag(\tilde{\gamma}_i),~
    \widetilde{U}_i=\Cayley(\widetilde{Y}_i),
\end{align*}
$Q_i=L_i^\top L_i,~L_0 =\rho I,~L_i =\diag(l_i)$, $L_i^F=\chol(F_i)$, where $F_i$ is given by \eqref{eq:matrixF} with $P_i$ parameterized from $Q_{i-1}$ and $H_i$ using \eqref{eq:Gramian}. The free variables $\widetilde{Y}_i\in\bbR^{\ell_ic_{i-1}\times c_i}$, $H_i\in\bbR^{n_{x_i}\times n_{x_i}}$, $\tilde{\gamma}_i,l_i\in\bbR^{c_{i}}$, $i\in\calI_C^\mathrm{max}$ compose the mapping $(\widetilde{Y}_i,H_i,\tilde{\gamma}_i,l_i,b_i)\mapsto(K^i,b_i)$.
\end{corollary}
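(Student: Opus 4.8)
The plan is to mirror the proof of the preceding theorem on convolutional layers, but carefully track the extra diagonality requirement that a maximum pooling layer imposes on $Q_i$. I would start from the same ingredient: by Lemma \ref{lem:Cayley} applied with $Z_i=0$ (so that only the square block $\widetilde{Y}_i$ enters), the matrix $\widetilde{U}_i=\Cayley(\widetilde{Y}_i)$ is \emph{orthogonal}, i.e. $\widetilde{U}_i^\top\widetilde{U}_i=I$. This is the crucial simplification: unlike the non-pooling case where $U_i^\top U_i+V_i^\top V_i=I$ splits the identity, here we want the whole orthogonality budget available so that the resulting $Q_i$ can be taken diagonal. Lemma \ref{lem:Gramian} again guarantees $F_i\succ0$ (using $Q_{i-1}=L_{i-1}^\top L_{i-1}\succ0$, which holds since $L_{i-1}=\diag(l_{i-1})$ or $L_{i-1}=\sqrt2\,U_{i-1}\Gamma_{i-1}$ from the previous layer), so the Cholesky factor $L_i^F=\chol(F_i)$ exists and $F_i={L_i^F}^\top L_i^F$.

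Next I would substitute the parameterization \eqref{eq:parameterization_CNN2}, namely $\widehat{C}_i=\Lambda_i^{-1}\widetilde{\Gamma}_i\widetilde{U}_i^\top L_i^F$, into the quantity $\Lambda_i\widehat{C}_iF_i^{-1}\widehat{C}_i^\top\Lambda_i$. Using $F_i^{-1}={L_i^F}^{-1}{L_i^F}^{-\top}$ and the orthogonality $\widetilde{U}_i^\top\widetilde{U}_i=I$, this collapses cleanly to $\widetilde{\Gamma}_i\widetilde{U}_i^\top\widetilde{U}_i\widetilde{\Gamma}_i=\widetilde{\Gamma}_i^\top\widetilde{\Gamma}_i$ (note $\widetilde{\Gamma}_i$ is diagonal hence symmetric). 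Therefore
\begin{equation*}
    2\Lambda_i-Q_i-\Lambda_i\widehat{C}_iF_i^{-1}\widehat{C}_i^\top\Lambda_i
    = 2\Lambda_i-Q_i-\widetilde{\Gamma}_i^\top\widetilde{\Gamma}_i.
\end{equation*}
By the definition $\Lambda_i=\tfrac12(\widetilde{\Gamma}_i^\top\widetilde{\Gamma}_i+Q_i)$, the right-hand side is exactly zero, so in particular it is positive semidefinite. A Schur complement with respect to the $(2\Lambda_i-Q_i)$ block then reassembles \eqref{eq:LMI-CNN-layers2}, which is \eqref{eq:LMI-CNN-layers}. I would also note that $Q_i=L_i^\top L_i=\diag(l_i)^2$ is diagonal by construction and $\Lambda_i$ is diagonal and positive definite (being a positive combination of the positive definite diagonal $Q_i$ and the diagonal $\widetilde{\Gamma}_i^\top\widetilde{\Gamma}_i\succeq0$), so both the structural hypotheses of Theorem \ref{thm:certification}(i) for a layer with maximum pooling — $Q_i\in\bbD^{c_i}$ and $\Lambda_i\in\bbD_+^{c_i}$ — are met, and the Schur complement step is legitimate since $2\Lambda_i-Q_i=\widetilde{\Gamma}_i^\top\widetilde{\Gamma}_i$ must be handled as a (possibly only semidefinite) pivot; I would instead argue directly via the equivalence "$\begin{bmatrix}F_i & -\widehat{C}_i^\top\Lambda_i\\ -\Lambda_i\widehat{C}_i & 2\Lambda_i-Q_i\end{bmatrix}\succeq0$ iff $2\Lambda_i-Q_i-\Lambda_i\widehat{C}_iF_i^{-1}\widehat{C}_i^\top\Lambda_i\succeq0$" that holds whenever $F_i\succ0$, avoiding any invertibility issue on the lower block.

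The main obstacle I anticipate is bookkeeping rather than a genuine mathematical difficulty: one must make sure the algebraic identity $\Lambda_i\widehat{C}_iF_i^{-1}\widehat{C}_i^\top\Lambda_i=\widetilde{\Gamma}_i^\top\widetilde{\Gamma}_i$ really does hold with the specific asymmetric $\Lambda_i^{-1}\widetilde{\Gamma}_i\widetilde{U}_i^\top$ prefactor (so that the two $\Lambda_i$ factors cancel the two $\Lambda_i^{-1}$ factors and leave precisely $\widetilde{\Gamma}_i^\top\widetilde{\Gamma}_i$), and to confirm that $\widetilde{U}_i$ obtained from $\Cayley(\widetilde{Y}_i)$ with a non-square $\widetilde{Y}_i\in\bbR^{\ell_ic_{i-1}\times c_i}$ is being used correctly — in the corollary $\widetilde{U}_i$ plays the role of the $V$-type block of a Stiefel parameterization, so the relevant identity is $\widetilde{U}_i^\top\widetilde{U}_i=I$ coming from Lemma \ref{lem:Cayley} with $Y=0$ there and $Z=\widetilde{Y}_i$. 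Once the correspondence of roles is pinned down, the computation is a line or two. A secondary point worth a sentence is that, as in the theorem, the bias $b_i$ is a free parameter that does not affect any of the LMIs (it only shifts the affine part $y_k^i=C_ix_k^i+D_iw_k^{i-1}+b_i$, which drops out of the incremental constraint \eqref{eq:slope_restriction}), so no condition on $b_i$ is needed.
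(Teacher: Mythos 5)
Your proposal is correct and follows essentially the same route as the paper: substitute \eqref{eq:parameterization_CNN2}, use $\widetilde{U}_i^\top\widetilde{U}_i=I$ and $F_i={L_i^F}^\top L_i^F$ (existence by Lemma~\ref{lem:Gramian}) to get $\Lambda_i\widehat{C}_iF_i^{-1}\widehat{C}_i^\top\Lambda_i=\widetilde{\Gamma}_i^\top\widetilde{\Gamma}_i=2\Lambda_i-Q_i$, and then recover \eqref{eq:LMI-CNN-layers2} by a Schur complement; your choice to pivot on $F_i\succ0$ rather than on the possibly singular block $2\Lambda_i-Q_i$, and your explicit check that $Q_i$ and $\Lambda_i$ are diagonal as required for maximum pooling, are sound refinements of the paper's one-line argument. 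One small correction: the identity $\widetilde{U}_i^\top\widetilde{U}_i=I$ does not come from Lemma~\ref{lem:Cayley} with $Z_i=0$ (then $\widetilde{Y}_i$ would have to be square) nor from taking only the $V$-block with $Y=0$ (the $V$-block alone satisfies $U^\top U+V^\top V=I$, not $V^\top V=I$); rather, $\widetilde{U}_i$ is the full stacked output $[U^\top\;V^\top]^\top$ of the extended Cayley transform applied to $\widetilde{Y}_i\in\bbR^{\ell_ic_{i-1}\times c_i}$ partitioned into a square top block $Y$ and a remaining block $Z$, which is exactly what yields $\widetilde{U}_i^\top\widetilde{U}_i=U^\top U+V^\top V=I$ for the tall Stiefel matrix $\widetilde{U}_i$. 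This is a notational slip rather than a gap, since the property you actually use is the correct one.
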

\begin{proof}
Using that $\widetilde{U}_i$ satisfies $\widetilde{U}_i^\top\widetilde{U}_i=  I$, we insert \eqref{eq:parameterization_CNN2} and we replace $F_i={L_i^F}^\top L_i^F$, whose Cholseky decomposition exists by Lemma~\ref{lem:Gramian}, to obtain
\begin{equation*}
    \Lambda_i\widehat{C}_i F^{-1} \widehat{C}_i^\top \Lambda_i = \widetilde{\Gamma}_i^\top\widetilde{\Gamma}_i =   2\Lambda_i-Q_i,
\end{equation*}
which implies $2\Lambda_i-Q_i-\Lambda_i\widehat{C}_i F^{-1} \widehat{C}_i^\top \Lambda_i\succeq 0$, which by the application of the Schur complement satisfies \eqref{eq:LMI-CNN-layers2}.
\end{proof}

\subsection{$\rho$-Lipschitz layers}
In this work, we enforce Lipschitz continuity onto the input-ouput mapping of the CNN, which is more general and less conservative than approaches that include Lipschitz guarantees for the individual layers, using that the product of the Lipschitz bounds gives the Lipschitz bound for the NN \cite{araujo2023a}. 
In this section, we adapt our approach to the special case of parameterizing $\rho$-Lipschitz linear layers, i.\,e., by choosing $Q_{i-1}= \rho^2 I$, $Q_i=I$, and $\Lambda_i = I$. 


\begin{corollary}\label{cor:FC_layer}
    The $i$-th linear fully connected layer
    \begin{equation*}
        v^i = W_{i}w^{i-1}+b_{i} \quad \text{with}
        \quad  W_i = \rho\widetilde{U}_i^\top,~
        b_i\in\bbR^{n_i},
    \end{equation*}
    is $\rho$-Lipschitz continuous, where $\widetilde{U}_i=\Cayley(\widetilde{Y}_i)$, and $\widetilde{Y}_i\in\bbR^{n_{i-1}\times n_i}$ is a free variable, which yields the mapping $(\widetilde{Y}_i,b_i) \mapsto (W_i,b_i)$.
\end{corollary}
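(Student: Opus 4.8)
The plan is to reduce the claim to the single inequality $\|W_i\|_2\le\rho$ on the spectral (operator) norm of the weight matrix, after which $\rho$-Lipschitz continuity is immediate. The LMI apparatus of Theorem~\ref{thm:certification} is not really needed for a single linear layer, although the result can also be read off from it with the specialization $Q_{i-1}=\rho^2 I$, $Q_i=I$, $\Lambda_i=I$ mentioned above.

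First I would invoke Lemma~\ref{lem:Cayley}: since $\widetilde U_i=\Cayley(\widetilde Y_i)$, the (extended) Cayley transform returns a matrix with orthonormal columns, $\widetilde U_i^\top\widetilde U_i = I$. Here $\widetilde Y_i\in\bbR^{n_{i-1}\times n_i}$ is partitioned into its upper $n_i\times n_i$ square block (playing the role of $Y$) and its remaining rows (playing the role of $Z$), so the construction tacitly assumes $n_{i-1}\ge n_i$, which is the relevant regime for a norm-nonexpanding map; I would also note, as in the remark following Lemma~\ref{lem:Cayley}, that $I+M$ is invertible, so $\widetilde U_i$ is well defined.

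Second, from $W_i=\rho\,\widetilde U_i^\top$ I would compute $W_i^\top W_i = \rho^2\,\widetilde U_i\widetilde U_i^\top$ and observe that $\widetilde U_i\widetilde U_i^\top$ is symmetric and idempotent, since $(\widetilde U_i\widetilde U_i^\top)^2 = \widetilde U_i(\widetilde U_i^\top\widetilde U_i)\widetilde U_i^\top = \widetilde U_i\widetilde U_i^\top$, hence an orthogonal projection with $0\preceq\widetilde U_i\widetilde U_i^\top\preceq I$. Therefore $W_i^\top W_i\preceq\rho^2 I$, i.e., every singular value of $W_i$ is at most $\rho$, which is precisely $\|W_i\|_2\le\rho$. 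Equivalently, $\rho^2 I - W_i^\top W_i\succeq 0$ is the Schur complement of the last-layer block in \eqref{eq:LMI_FC_layers} with $Q_{l-1}=\rho^2 I$, so the corollary is also consistent with Theorem~\ref{thm:certification}.

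Third, for arbitrary $x,y\in\bbR^{n_{i-1}}$ the bias cancels, $v^i(x)-v^i(y)=W_i(x-y)$, so $\|v^i(x)-v^i(y)\|_2\le\|W_i\|_2\,\|x-y\|_2\le\rho\,\|x-y\|_2$, which is the asserted $\rho$-Lipschitz continuity in the $\ell_2$ norm. The only step requiring any care — and the closest thing to an obstacle in an otherwise one-line argument — is the bookkeeping around the non-square Cayley transform: confirming that $\Cayley(\widetilde Y_i)$ is well defined and that it indeed returns a partial isometry of the correct shape so that $\widetilde U_i^\top\widetilde U_i = I$; everything after that is an elementary operator-norm estimate.
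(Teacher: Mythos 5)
Your proof is correct, and it shares the paper's key ingredient -- the Cayley transform output having orthonormal columns, $\widetilde U_i^\top\widetilde U_i=I$ (Lemma~\ref{lem:Cayley}) -- but it takes a more direct route to the conclusion. The paper does not spell the proof out: it states that Corollary~\ref{cor:FC_layer} follows ``along the lines of'' Corollary~\ref{cor:maxpooling}, i.e., one substitutes the specialization $Q_{i-1}=\rho^2 I$, $Q_i=I$, $\Lambda_i=I$ into the parameterization, verifies the corresponding block in \eqref{eq:LMI_FC_layers} via a Schur complement, and then inherits $\rho$-Lipschitz continuity from the dissipativity framework of Theorem~\ref{thm:certification}. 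You instead bypass the LMI machinery entirely: from $W_i=\rho\widetilde U_i^\top$ you get $W_i^\top W_i=\rho^2\,\widetilde U_i\widetilde U_i^\top\preceq\rho^2 I$ (in fact $W_iW_i^\top=\rho^2 I$, so $W_i$ is exactly a scaled partial isometry with $\Vert W_i\Vert_2=\rho$ when $n_{i-1}\ge n_i$), and since the bias cancels in differences, the operator-norm bound gives \eqref{eq:lipschitz} for the single layer directly. For one linear layer the two arguments encode the same inequality -- your $\rho^2 I-W_i^\top W_i\succeq 0$ is precisely the Schur complement of the specialized last-layer LMI -- so the approaches are equivalent in content; yours is more self-contained and elementary, while the paper's phrasing keeps the corollary uniform with the layer-wise LMI parameterizations so it can be composed with the other layers. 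Your remark that the non-square Cayley construction tacitly requires $n_{i-1}\ge n_i$ (top square block $Y$, remaining rows $Z$, and $I+M$ nonsingular) is a valid and worthwhile point of care that the paper leaves implicit.
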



\begin{corollary}\label{cor:conv_layer}
    The $i$-th 1D convolution
    \begin{equation*}
        v_i = b_i + \sum_{j=0}^{\ell_i-1} K_j^i w_{k-j}^{i-1}\quad\text{with}\quad \widehat{C}_i  =  \widetilde{U}_i^\top {L_i^F},~b_i\in\bbR^{c_i},
    \end{equation*}
    where $K^i$ is recovered according to \eqref{eq:CD}, is $\rho$-Lipschitz continuous. Herein, $\widetilde{U}_i=\Cayley(\widetilde{Y}_i)$, $L_i^F=\chol(F_i)$, where $F_i$ is given by \eqref{eq:matrixF} with $P_i$ parameterized from $Q_{i-1}=\rho^2 I$ and $H_i$ using \eqref{eq:Gramian}. Beside $b_i$ the free variables are $\widetilde{Y}_i\in\bbR^{\ell_ic_{i-1}\times c_i}$, $H_i\in\bbR^{n_{x_i}\times n_{x_i}}$, yielding the mapping $(\widetilde{Y}_i,H_i,b_i)\mapsto(K^i,b_i)$.
\end{corollary}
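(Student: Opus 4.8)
The plan is to follow the same line of argument as in the proof of the convolutional-layer theorem, specialized to the activation-free setting $Q_{i-1}=\rho^2 I$, $Q_i=I$, $\Lambda_i=I$. Concretely, I would show that the proposed $\widehat{C}_i$ makes the dissipativity LMI \eqref{eq:LMI-CNN-layers2} feasible. With these choices, \eqref{eq:LMI-CNN-layers} is exactly the bounded-real (KYP-type) condition for the FIR filter $(A_i,B_i,C_i,D_i)$ from \eqref{eq:ABCD} with storage function $x^\top P_i x$ and supply rate $\rho^2\|w^{i-1}_k\|^2-\|y^i_k\|^2$; summing the resulting dissipation inequality from zero initial state (as in the proof of Theorem~\ref{thm:certification}) yields $\|y^i\|_2\le\rho\|w^{i-1}\|_2$, and since $w^{i-1}\mapsto v^i$ is affine with linear part $w^{i-1}\mapsto y^i$, this is the claimed $\rho$-Lipschitz bound.

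The LMI feasibility would be established in a few lines. By Lemma~\ref{lem:Cayley}, $\widetilde{U}_i=\Cayley(\widetilde{Y}_i)$ satisfies $\widetilde{U}_i^\top\widetilde{U}_i=I$. By Lemma~\ref{lem:Gramian}, taking $P_i=X_i^{-1}$ as in \eqref{eq:Gramian} with $Q_{i-1}=\rho^2 I\succ 0$ renders $F_i$ in \eqref{eq:matrixF} positive definite, so its Cholesky factor $L_i^F$ is square and invertible with $F_i={L_i^F}^\top L_i^F$. Substituting $\widehat{C}_i=\widetilde{U}_i^\top L_i^F$ and using invertibility of $L_i^F$ gives
\begin{equation*}
\widehat{C}_i F_i^{-1}\widehat{C}_i^\top = \widetilde{U}_i^\top L_i^F\big({L_i^F}^\top L_i^F\big)^{-1}{L_i^F}^\top\widetilde{U}_i = \widetilde{U}_i^\top\widetilde{U}_i = I,
\end{equation*}
so that, with $\Lambda_i=Q_i=I$, one has $2\Lambda_i-Q_i-\Lambda_i\widehat{C}_i F_i^{-1}\widehat{C}_i^\top\Lambda_i = 0\succeq 0$; since $F_i\succ 0$, the Schur complement turns this into \eqref{eq:LMI-CNN-layers2}. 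Recovering $K^i$ from $\widehat{C}_i$ via \eqref{eq:CD} then completes the argument.

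I do not expect a genuine obstacle, as every step specializes Lemma~\ref{lem:Cayley}, Lemma~\ref{lem:Gramian}, and Theorem~\ref{thm:certification}. The points needing a little care are: the bookkeeping that a square, invertible Cholesky factor $L_i^F$ makes $\widehat{C}_iF_i^{-1}\widehat{C}_i^\top$ collapse exactly to $\widetilde{U}_i^\top\widetilde{U}_i$; and the implicit regime $\ell_ic_{i-1}\ge c_i$ under which Lemma~\ref{lem:Cayley} yields a matrix with orthonormal columns (if $c_i>\ell_ic_{i-1}$ one only obtains $\widetilde{U}_i^\top\widetilde{U}_i\preceq I$, which still closes $2\Lambda_i-Q_i-\Lambda_i\widehat{C}_iF_i^{-1}\widehat{C}_i^\top\Lambda_i\succeq 0$).
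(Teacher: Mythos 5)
Your proposal is correct and follows essentially the same route as the paper, which proves this corollary "along the lines" of Corollary~\ref{cor:maxpooling}: use $\widetilde{U}_i^\top\widetilde{U}_i=I$ from the Cayley transform, $F_i={L_i^F}^\top L_i^F\succ 0$ from Lemma~\ref{lem:Gramian}, substitute $\widehat{C}_i=\widetilde{U}_i^\top L_i^F$, and apply the Schur complement to recover \eqref{eq:LMI-CNN-layers2} with the specialization $Q_{i-1}=\rho^2 I$, $Q_i=I$, $\Lambda_i=I$. Your added dissipation-summation argument (bounded-real/KYP reading of the LMI) and the remark on the regime $\ell_i c_{i-1}\ge c_i$ only make explicit what the paper leaves implicit via Theorem~\ref{thm:certification}.
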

The proofs of Corollaries \ref{cor:FC_layer} and \ref{cor:conv_layer} follow along the lines of the proof of Corollary~\ref{cor:maxpooling}.

\section{Simulation results}\label{sec:simulation}
\begin{figure}
\centering
\input{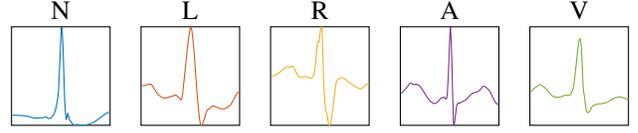}
\caption{Different heart wave classes.}
\label{fig:heartwaves}
\vspace{-0.1cm}
\end{figure}

In this section, we train Lipschitz-bounded 1D CNNs (LipCNNs) to classify heart arrythmia data from the MIT-BIH database, an ECG database \cite{mitbihdatabase}, in its preprocessed form \cite{abuadbba2020git} that assigns heart wave signals to one of the five classes depicted in Fig. \ref{fig:heartwaves}: N (normal beat), L (left bundle branch block), R (right bundle branch block), A (atrial premature contraction), V (ventricular premature contraction). In particular, we use 26,490 samples, 13,245 training data points and 13,245 test data points. Fig.~\ref{fig:CNN} shows the architecture of the 1D CNN used to train on the ECG dataset, and for training we choose the cross-entropy loss as the objective. 
For training, we use Flux, JuMP, and RobustNeuralNetworks.jl \cite{barbara2023robustneuralnetworks} in Julia in combination with MOSEK \cite{mosek} on a standard i7 notebook\footnote{Code is available at \url{www.github.com/ppauli/1D-LipCNNs}.}.

We compare our approach to vanilla CNNs and L2 regularized CNNs with different weighting factors $\gamma$. To evaluate the robustness of the CNNs, we compute the test accuracy on adversarial examples from the L2 projected gradient descent attack for different perturbation strengths, which is shown in Fig. \ref{fig:PGD}. Note that the two shown LipCNNs indicate comparably high test accuracies to the shown vanilla and L2 regularized CNNs on unperturbed data while maintaining higher accuracies than their counterparts as the perturbation strength increases. Further, Table~\ref{tab{acc+Lip}} shows the averaged test accuracies and upper and lower Lipschitz bounds for different 1D CNNs and   Fig.~\ref{fig:robacc} illustrates the tradeoff between accuracy and robustness. Low Lipschitz bounds mean high robustness, yet we observe lower test accuracies and vice versa. We note that the vanilla CNN has significantly larger upper Lipschitz bounds than LipCNN and further, LipCNN maintains high test accuracies as the Lipschitz bound decreases in Fig.~\ref{fig:robacc}. With even larger weighting parameters $\gamma$ in L2 regularized training the training failed, whereas LipCNNs allows for training with very low Lipschitz bounds.

\begin{figure}
\begin{subfigure}[b]{0.22\textwidth}
    \hspace{-0.6cm}    
    \begin{minipage}{\textwidth}
    {\tiny
    \begin{tikzpicture}[node distance = 0.55cm, auto]
        \node [block4] (input) {
        Input signal: 128 (dimension) x 1 (channel)
        };
        \node [block, below of=input] (CNN1) {
        Conv: 3 kernel + front padding: 128 x 1
        };
        \node [block2, below of=CNN1,node distance = 0.55cm] (avpool1) {
        Average pool: 2 kernel + 2 stride: 128 x 2
        };
        \node [block, below of=avpool1] (CNN2) {
        Conv: 3 kernel + front padding: 64 x 3
        };
        \node [block2, below of=CNN2,node distance = 0.55cm] (avpool2) {
        Average pool: 2 kernel + 2 stride: 32 x 3
        };
        \node [block3, below of=avpool2,node distance = 0.55cm] (FC1) {
        Dense: 60 fully connected layer
        };
        \node [block3, below of=FC1] (FC2) {
        Dense: 5 fully connected layer
        };
        \node [block4, below of=FC2] (output) {
        Output: 1 of 5 classes
        };
        \path [line] (input) -- (CNN1);
        \path [line] (CNN1) -- node [midway,align=right] {ReLU} (avpool1);
        \path [line] (avpool1) -- (CNN2);
        \path [line] (CNN2) -- node [midway,align=right] {ReLU} (avpool2);
        \path [line] (avpool2) -- node [midway,align=right] {flatten} (FC1);
        \path [line] (FC1) -- node [midway,align=right] {ReLU} (FC2);
        \path [line] (FC2) -- (output);
    \end{tikzpicture}}
\end{minipage}
\caption{1D CNN architecture.}\label{fig:CNN}
\end{subfigure}
\begin{subfigure}[b]{0.26\textwidth}
\centering
%
%
\definecolor{mycolor1}{rgb}{0.00000,0.44700,0.74100}%
\definecolor{mycolor2}{rgb}{0.85000,0.32500,0.09800}%
\definecolor{mycolor3}{rgb}{0.92900,0.69400,0.12500}%
\definecolor{mycolor4}{rgb}{0.49400,0.18400,0.55600}%
\begin{tikzpicture}

\begin{axis}[%
width=1.2in,
height=1.2in,
at={(0.37in,0.344in)},
scale only axis,
xmin=0,
xmax=12.3,
xlabel style={font=\color{white!15!black}},
xlabel={\small{perturbation strength}},
ymin=0.4,
ymax=1,
ylabel style={font=\color{white!15!black}},
xlabel near ticks,
ylabel={\small{test accuracy}},
ylabel near ticks,
axis background/.style={fill=white},
legend style={at={(0.03,0.03)}, anchor=south west, legend cell align=left, align=left, draw=white!15!black},
every tick label/.append style={font=\footnotesize}
]
\addplot [color=mycolor1, line width=1.0pt]
  table[row sep=crcr]{%
0	0.954699886749717\\
0.6	0.950245375613439\\
1.2	0.944884862212156\\
1.8	0.940505851264628\\
2.4	0.934692336730842\\
3	0.92638731596829\\
3.6	0.91521328803322\\
4.2	0.90554926387316\\
4.8	0.893091732729332\\
5.4	0.878822197055493\\
6	0.862665156662892\\
6.6	0.844998112495281\\
7.2	0.824160060400151\\
7.8	0.795696489241223\\
8.4	0.762853907134768\\
9	0.725179312948282\\
9.6	0.681766704416761\\
10.2	0.63035107587769\\
10.8	0.58233295583239\\
11.4	0.533484333710834\\
12.6	0.444469611174028\\
13.2	0.401132502831257\\
13.8	0.366628916572291\\
14.4	0.32955832389581\\
15	0.301321253303133\\
};
\addlegendentry{\footnotesize{vanilla}}

\addplot [color=mycolor2, line width=1.0pt]
  table[row sep=crcr]{%
0	0.911060777651944\\
0.6	0.906983767459419\\
1.2	0.902604756511891\\
1.8	0.897772744431861\\
2.4	0.892563231408078\\
3	0.885088712721782\\
3.6	0.877085692714232\\
4.2	0.86719516798792\\
4.8	0.858361645904115\\
5.4	0.848244620611551\\
6	0.838052095130238\\
6.6	0.825519063797659\\
7.2	0.812080030200075\\
7.8	0.798112495281238\\
8.4	0.782030955077388\\
9	0.76481691204228\\
9.6	0.74563986409966\\
10.2	0.725707814269536\\
10.8	0.70245375613439\\
11.4	0.678067195167988\\
12.6	0.62551906379766\\
13.2	0.597508493771234\\
13.8	0.572668931672329\\
14.4	0.548357870894677\\
15	0.525330313325783\\
};
\addlegendentry{\footnotesize{L2$_{\gamma=0.1}$}}

\addplot [color=mycolor3, line width=1.0pt]
  table[row sep=crcr]{%
0	0.90887127217818\\
0.6	0.903057757644394\\
1.2	0.897017742544356\\
1.8	0.891430728576821\\
2.4	0.885617214043035\\
3	0.880634201585504\\
3.6	0.87361268403171\\
4.2	0.866893167232918\\
4.8	0.861532653831635\\
5.4	0.854964137410343\\
6	0.847338618346546\\
6.6	0.839637599093998\\
7.2	0.832540581351453\\
7.8	0.822650056625142\\
8.4	0.812231030577576\\
9	0.801132502831257\\
9.6	0.78875047187618\\
10.2	0.776217440543601\\
10.8	0.76315590788977\\
11.4	0.745262363155908\\
12.6	0.710683276708192\\
13.2	0.695205738014345\\
13.8	0.679199697999245\\
14.4	0.664930162325406\\
15	0.649452623631559\\
};
\addlegendentry{\footnotesize{Lip10}}

\addplot [color=mycolor4, line width=1.0pt]
  table[row sep=crcr]{%
0	0.961192902982257\\
0.6	0.957417893544734\\
1.2	0.953416383540959\\
1.8	0.948886372215931\\
2.4	0.942770856927142\\
3	0.935447338618347\\
3.6	0.929784824462061\\
4.2	0.920724801812005\\
4.8	0.910230275575689\\
5.4	0.900490751226878\\
6	0.889467723669309\\
6.6	0.875877689694224\\
7.2	0.861079652699132\\
7.8	0.84484711211778\\
8.4	0.824311060777652\\
9	0.802567006417516\\
9.6	0.78082295205738\\
10.2	0.755832389580974\\
10.8	0.729709324273311\\
11.4	0.696338240845602\\
12.6	0.626651566628917\\
13.2	0.589203473008682\\
13.8	0.551377878444696\\
14.4	0.509324273310683\\
15	0.466364665911665\\
};
\addlegendentry{\footnotesize{Lip50}}

\end{axis}
\end{tikzpicture}%
\caption{Adversarial attack.}
\label{fig:PGD}
\end{subfigure}
\caption{(a) CNN used for ECG dataset. (b) Vanilla, L2 regularized ($\gamma= 0.1$) and Lipschitz ($\rho = 10, 50$)  CNN on adversarial examples with different perturbation strengths $\epsilon$ created using the L2 projected gradient descent attack \cite{madry2017towards}.}
\end{figure}
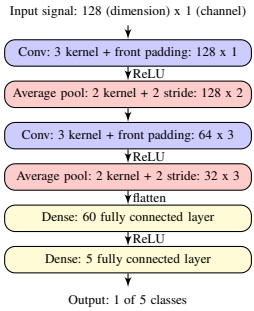
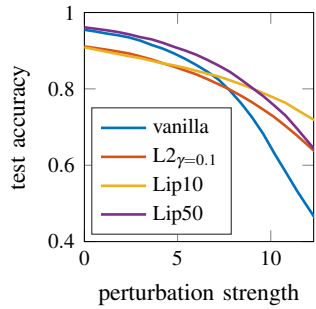

\begin{table}
\caption{Test accuracy and Lipschitz upper bound \cite{pauli2023lipschitz}  and empirical Lipschitz lower bound of trained CNNs, averaged over 5 CNNs.}
\label{tab{acc+Lip}}
\begin{center}
\begin{tabular}{ c | c c c c c c}
    & vanilla & L2$_{\gamma = 0.05}$  & L2$_{\gamma = 0.1}$ & Lip5 & Lip10 & Lip50 \\ \hline
 Test acc.   & $\!94.9\%\!$ & $\!92.4\%\!$ & $\!90.6\%\!$  & $\!84.8\%\!$ & $\!90.0\%\!$ & $\!94.7\%\!$ \\
 Lip. UB & 147 & 45.7 &  33.9 & 4.99 & 9.85 & 45.3\\
 Emp. LB & 28.3 &  13.2 & 10.2 & 2.20   & 4.06 & 15.1     
\end{tabular}
\end{center}
\vspace{-0.3cm}
\end{table}

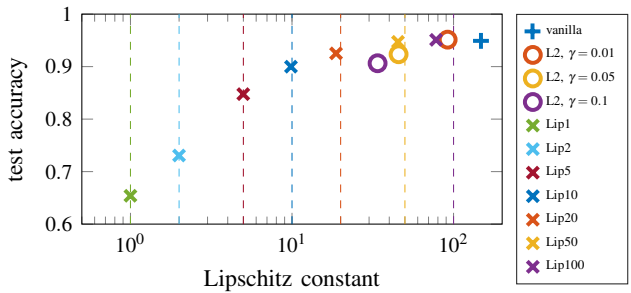
\begin{figure}
\centering
%
%
\definecolor{mycolor1}{rgb}{0.00000,0.44700,0.74100}%
\definecolor{mycolor2}{rgb}{0.85000,0.32500,0.09800}%
\definecolor{mycolor3}{rgb}{0.92900,0.69400,0.12500}%
\definecolor{mycolor4}{rgb}{0.49400,0.18400,0.55600}%
\definecolor{mycolor5}{rgb}{0.46600,0.67400,0.18800}%
\definecolor{mycolor6}{rgb}{0.30100,0.74500,0.93300}%
\definecolor{mycolor7}{rgb}{0.63500,0.07800,0.18400}%
\begin{tikzpicture}

\begin{axis}[%
width=2.2in,
height=1.1in,
at={(0.406in,0.344in)},
scale only axis,
xmode=log,
xmin=0.5,
xmax=200,
xminorticks=true,
ymin=0.6,
ymax=1,
xlabel={\small{Lipschitz constant}},
xlabel near ticks,
ylabel={\small{test accuracy}},
ylabel near ticks,
axis background/.style={fill=white},
legend style={at={(1.3,0.357)}, anchor=east, legend cell align=left, align=left, draw=white!15!black},
every tick label/.append style={font=\footnotesize}
]
\addplot [color=mycolor1, line width=1.5pt, only marks, mark size=3.0pt, mark=+, mark options={solid, mycolor1}]
  table[row sep=crcr]{%
147.82996888	0.9490524726\\
};
\addlegendentry{\tiny{vanilla}}

\addplot [color=mycolor2, line width=1.5pt, only marks, mark size=3.0pt, mark=o, mark options={solid, mycolor2}]
  table[row sep=crcr]{%
91.925581392	0.9513476782\\
};
\addlegendentry{\tiny{L2, $\gamma = 0.01$}}

\addplot [color=mycolor3, line width=1.5pt, only marks, mark size=3.0pt, mark=o, mark options={solid, mycolor3}]
  table[row sep=crcr]{%
45.687019862	0.9235636088\\
};
\addlegendentry{\tiny{L2, $\gamma = 0.05$}}

\addplot [color=mycolor4, line width=1.5pt, only marks, mark size=3.0pt, mark=o, mark options={solid, mycolor4}]
  table[row sep=crcr]{%
33.871171056	0.9064099658\\
};
\addlegendentry{\tiny{L2, $\gamma = 0.1$}}

\addplot [color=mycolor5, line width=1.5pt, only marks, mark size=3.0pt, mark=x, mark options={solid, mycolor5}]
  table[row sep=crcr]{%
0.9995026852	0.6539675348\\
};
\addlegendentry{\tiny{Lip1}}

\addplot [color=mycolor6, line width=1.5pt, only marks, mark size=3.0pt, mark=x, mark options={solid, mycolor6}]
  table[row sep=crcr]{%
1.9990605572	0.7306757266\\
};
\addlegendentry{\tiny{Lip2}}

\addplot [color=mycolor7, line width=1.5pt, only marks, mark size=3.0pt, mark=x, mark options={solid, mycolor7}]
  table[row sep=crcr]{%
4.9936367292	0.847610419\\
};
\addlegendentry{\tiny{Lip5}}

\addplot [color=mycolor1, line width=1.5pt, only marks, mark size=3.0pt, mark=x, mark options={solid, mycolor1}]
  table[row sep=crcr]{%
9.852389112	0.8998565496\\
};
\addlegendentry{\tiny{Lip10}}

\addplot [color=mycolor2, line width=1.5pt, only marks, mark size=3.0pt, mark=x, mark options={solid, mycolor2}]
  table[row sep=crcr]{%
18.737256618	0.9252246132\\
};
\addlegendentry{\tiny{Lip20}}

\addplot [color=mycolor3, line width=1.5pt, only marks, mark size=3.0pt, mark=x, mark options={solid, mycolor3}]
  table[row sep=crcr]{%
45.306912144	0.9471800682\\
};
\addlegendentry{\tiny{Lip50}}

\addplot [color=mycolor4, line width=1.5pt, only marks, mark size=3.0pt, mark=x, mark options={solid, mycolor4}]
  table[row sep=crcr]{%
77.970043432	0.9510003774\\
};
\addlegendentry{\tiny{Lip100}}

\addplot [color=mycolor5, dashed, forget plot]
  table[row sep=crcr]{%
1	0.6\\
1	1\\
};
\addplot [color=mycolor6, dashed, forget plot]
  table[row sep=crcr]{%
2	0.6\\
2	1\\
};
\addplot [color=mycolor7, dashed, forget plot]
  table[row sep=crcr]{%
5	0.6\\
5	1\\
};
\addplot [color=mycolor1, dashed, forget plot]
  table[row sep=crcr]{%
10	0.6\\
10	1\\
};
\addplot [color=mycolor2, dashed, forget plot]
  table[row sep=crcr]{%
20	0.6\\
20	1\\
};
\addplot [color=mycolor3, dashed, forget plot]
  table[row sep=crcr]{%
50	0.6\\
50	1\\
};
\addplot [color=mycolor4, dashed, forget plot]
  table[row sep=crcr]{%
100	0.6\\
100	1\\
};
\end{axis}
\end{tikzpicture}%
\caption{Accuracy robustness tradeoff. Test accuracy over Lipschitz constant from SDP \cite{pauli2023lipschitz}, averaged over 5 CNNs.}
\label{fig:robacc}
\end{figure}

\section{Conclusion}\label{sec:conclusion}
In this paper, we introduced a parameterization for Lipschitz-bounded 1D CNNs using Cayley transforms and controllability Gramians. Using our parameterization we can train Lipschitz-bounded 1D CNNs in an unconstrained training problem which we illustrated in the classification of ECG data from the MIT-BIH database. Future research includes the extension of our parameterization to 2D CNNs using a 2D systems approach as suggested in \cite{gramlich2022convolutional}.

\bibliographystyle{IEEEtran}
\bibliography{references}

\end{document}